\title{Differentially Private Empirical Risk Minimization with Sparsity-Inducing Norms}
\begin{document}

\author{K S Sesh Kumar \\
Department of Computing\\ Imperial College London\\
\texttt{s.karri@imperial.ac.uk}
\and Marc Peter Deisenroth\\
Department of Computing\\ Imperial College London \\
\texttt{m.deisenroth@imperial.ac.uk}
}

\def\delequal{\mathrel{\ensurestackMath{\stackon[1pt]{=}{\scriptstyle\Delta}}}}

\newcommand{\note}[1]{{\textbf{\color{red}#1}}}

\def\ci{\perp\!\!\!\perp} 
\newcommand\independent{\protect\mathpalette{\protect\independenT}{\perp}} 
\def\independenT#1#2{\mathrel{\rlap{$#1#2$}\mkern2mu{#1#2}}} 

\newcommand{\fix}{\marginpar{FIX}}
\newcommand{\new}{\marginpar{NEW}}

\newcommand{\BEAS}{\begin{eqnarray*}}
\newcommand{\EEAS}{\end{eqnarray*}}
\newcommand{\BEA}{\begin{eqnarray}}
\newcommand{\EEA}{\end{eqnarray}}
\newcommand{\BEQ}{\begin{equation}}
\newcommand{\EEQ}{\end{equation}}
\newcommand{\BIT}{\begin{itemize}}
\newcommand{\EIT}{\end{itemize}}
\newcommand{\BNUM}{\begin{enumerate}}
\newcommand{\ENUM}{\end{enumerate}}
\newcommand{\BA}{\begin{array}}
\newcommand{\EA}{\end{array}}
\newcommand{\diag}{\mathop{\rm diag}}
\newcommand{\Diag}{\mathop{\rm Diag}}

\newcommand{\Cov}{{\mathop { \rm cov{}}}}
\newcommand{\cov}{{\mathop {\rm cov{}}}}
\newcommand{\corr}{{\mathop { \rm corr{}}}}
\newcommand{\Var}{\mathop{ \rm var{}}}
\newcommand{\argmin}{\mathop{\rm argmin}}
\newcommand{\argmax}{\mathop{\rm argmax}}
\newcommand{\var}{\mathop{ \rm var}}
\newcommand{\Tr}{\mathop{ \rm tr}}
\newcommand{\tr}{\mathop{ \rm tr}}
\newcommand{\sign}{\mathop{ \rm sign}}
\newcommand{\idm}{I}
\newcommand{\rb}{\mathbb{R}}
\newcommand{\BlackBox}{\rule{1.5ex}{1.5ex}}  
\newcommand{\lova}{Lov\'asz }

\newenvironment{proof}{\par\noindent{\bf Proof\ }}{\hfill\BlackBox\\[2mm]}
\newtheorem{lemma}{Lemma}
\newtheorem{theorem}{Theorem}
\newtheorem{proposition}{Proposition}
\newtheorem{definition}{Definition}
\newtheorem{corollary}{Corollary}

\renewcommand{\labelitemiii}{$-$}

\newcommand{\mysec}[1]{Section~\ref{sec:#1}}
\newcommand{\eq}[1]{Eq.~(\ref{eq:#1})}
\newcommand{\myfig}[1]{Figure~\ref{fig:#1}}
\newcommand{\mytable}[1]{Table~\ref{table:#1}}

\def \supp{ { \rm Supp }}
\def \card{ { \rm Card }}

\allowdisplaybreaks

\bibliographystyle{abbrv}

\maketitle

\begin{abstract}
Differential privacy is concerned about the prediction quality while measuring the privacy impact on individuals whose information is contained in the data.
We consider differentially private risk minimization problems with regularizers that induce structured sparsity.
These regularizers are known to be convex but they are often non-differentiable. We analyze the standard differentially private algorithms, such as output perturbation, Frank-Wolfe and objective perturbation. Output perturbation is a differentially private algorithm that is known to perform well for minimizing risks that are strongly convex. 
%
Previous works have derived excess risk bounds that are independent of the dimensionality. 
In this paper, we assume a particular class of convex but non-smooth regularizers that induce structured sparsity and loss functions for generalized linear models. We also consider differentially private Frank-Wolfe algorithms to optimize the dual of the risk minimization problem. We derive excess risk bounds for both these algorithms. Both the bounds depend on the Gaussian width of the unit ball of the dual norm. We also show that objective perturbation of the risk minimization problems is equivalent to the output perturbation of a dual optimization problem. This is the first work that analyzes the dual optimization problems of risk minimization problems in the context of differential privacy.
\end{abstract}

\section{Introduction}
Large amounts of data are at our disposal to be able to build machine learning systems. The learnability of a system highly relies on the data. However, in  recent years there has been a growing concern on the privacy of the data used to build these systems. Differential privacy~\cite{Dwork2014} based algorithms are used to tackle this issue. It defines the notion of privacy for every data point that is used to train a learning system. Intuitively, differential privacy ensures that that two different datasets that differ in only one data point induce similar output distributions. This ensures the privacy of the particular data point. The advantage of privacy of the data point is not free of cost and comes at a price. The goal of this work is to understand the bounds of the price we pay for privacy of the data points in Empirical Risk Minimization (ERM).

Empirical Risk Minimization (ERM) in supervised machine learning~\cite{Vapnik1995}, uses training data that includes the input vector (also referred to as `feature vector') and the desired output to learn a predictor function. This predictor function is then used to predict the output of unseen data in the testing phase. However, the data is picked uniformly at random as we do not have access to the true distribution of the data. The learnability of the predictor function thus learnt is often characterized by generalization risk, i.e., the expectation of the error of the predictor function in the true distribution of the data points. Empirical risk is referred to the loss of the predictor function when the same data is picked uniformly at random. The generalization bound is the bound on the difference of the generalization risk and the empirical risk. Most supervised machine learning algorithms have generalization bounds that depend on the number of data samples. The generalization error reduces as the number of data points increases.

 Most differentially private ERM~\cite{chaudhuri2011} use techniques, such as adding noise to either the predicted function or some intermediate step in the algorithm. This increases the risk of the predictor function. Excess risk is this increase in risk due to addition of random noise. For most differentially private ERM the excess risk is a function of both the number of data points and the length of the input vector. Earlier works~\cite{jain14, talwar2014}  have derived bounds of excess risk with no dependency on the dimensionality of the input vector. This is useful because learnability already depends on the number of data points.

{\em Contribution.} This work focuses on ERM problems that learn using {\em sparsity inducing norms}~\cite{Bach2010} as regularizers. We have three main contributions in this paper. (i) We consider output perturbation for non-smooth regularizers. (ii) We consider a private version of the Frank-Wolfe algorithm to optimize the dual of the ERM. We provide utility guarantees for these two algorithms. We show that the utility guarantee depends on the Gaussian width of the unit ball of the dual norm. (iii) We consider the objective perturbation and show that output perturbation on the ERM is equivalent to output perturbation of the dual.

\section{Background}
We consider the empirical risk minimization problem where we have $n$ training data points with each data point represented by a tuple $d_i = (x_i, y_i)$ of the input vector $x_i\in \rb^p$ in a $p$-dimensional space  and the desired output $y_i\in \rb$. We represent the set of all training data points by $D = \{d_1, d_2, \ldots, d_n\}$. We consider {\em generalized linear models} (GLMs) where the goal is to estimate the parameter $\theta$ in a $p$-dimensional space such that $\theta^{\top} x=y$.  Empirical risk minimization considers a loss function the minimizes the error between these two values represented by $l(\theta; d_i)$. For instance, squared loss considers the loss function of the form $l(\theta; d_i)=(\theta^{\top}x_i-y_i)^2$. Note that GLMs may consider different loss functions, such as the squared loss, logistic loss, hinge loss etc. In order to ensure that the model with parameter $\theta$ does not overfit to the (noisy) training data, a regularizer of $\theta$ is used. This regularizer is often represented by $\Omega(\theta)$. Therefore, the empirical risk is 
denoted by $\mathcal{L}(\theta; D)$, where
\BEQ
\mathcal{L}(\theta; D) = \frac{1}{n}\sum_{i=1}^n l(\theta; d_i) + \frac{\lambda}{n}\Omega(\theta),
\label{eq:erm}
\EEQ
where $\lambda$ is the regularizing parameter. The regularizing parameter acts as a Lagrangian multiplier that ensure the presence of the solution in a convex set $\Omega(\theta) \leq c$ for some positive scalar c. The {\em generalized risk} for this problem is given by 
\BEQ
J(\theta; D) = \min_{\theta \in \rb^p} \mathbb{E}_{d_i \sim Dist} [l(\theta; d_i) + \frac{\lambda}{n}\Omega(\theta)].
\label{eq:generalizedRisk}
\EEQ
In this work, we focus on regularizers that induce structured sparsity. There are essentially two forms of sparsity: a) the vector $\theta$ with low non-zero components and b) there are many component in $\theta$ that have the same values. Both these sparse forms may be induced by using submodular functions as the regularizer. We will now briefly review  submodular analysis that will be helpful in understanding this work.

Let us assume a set $V$ representing all dimensions of the vector $\theta$, $V = \{1, 2, \ldots, p\}$. A discrete function $F$ defined on all possible subsets of $V$, $F:2^V \to \rb$ is submodular if 
\begin{align}
\forall A, B \subseteq V: F(A) + F(B) \geq F(A \cup B)  + F(A \cap B).
\end{align}
The power set $2^V$ is naturally identified with the vertices $\{0,1\}^p$ of the hypercube in $p$ dimensions (going from $A \subseteq V$ to $1_A \in \{0,1\}^p$)\footnote{$1_A(i) = 1$ if $i \in A$ and $0$ otherwise.}. Thus, any set-function may be seen as a function $f$ on $\{0,1\}^p$. It turns out that $f$ may be extended to the full hypercube $[0,1]^p$ by piecewise-linear interpolation, and then to the whole vector space $\rb^p$.  This extension is piecewise linear for any set-function~$F$. It turns out that it is convex if and only if $F$ is submodular~\cite{lovasz1982submodular}. Therefore, given a submodular function $F:2^V \to \rb$, we may define its convex extension $f:\rb^p \to \rb$, also referred to as the \lova extension of $F$.

We will now review the use of submodular functions for inducing structured sparsity in $\theta$. Let us assume the support set of $\theta$ as all dimensions with non-zero values defined by $\supp(\theta) = \{j \in V, \theta_j \neq 0\}$. Let us now rewrite the empirical risk in \eq{erm} that may be used to estimate $\theta$ with sparse structures,
%
\BEQ
\mathcal{L}(\theta; D) = \frac{1}{n}\sum_{i=1}^n l(\theta; d_i) + \frac{\lambda}{n}F(\supp(\theta)),
\label{eq:sparseERMD}
\EEQ
where $F$ is a non-decreasing submodular function such that $F(\varnothing) = 0$ and the value of $F$ on all the singletons is strictly positive. We may define a tight convex relaxation of $F(\supp(\theta))$ on the unit $L_{\infty}$ ball $[-1, 1]^p$ as $\Omega_{\infty}: \theta \to f(|\theta|)$, where $f$ is the \lova extension of $F$ and $|\theta|$ is elementwise absolute value of $\theta$. This is known to be a norm~\cite{Bach2010}. The dual norm\footnote{Dual norm of $\Omega$ is $\forall s \in \rb^p,  \Omega^*(s) = \sup_{\Omega(\theta) \leq 1} s^{\top}\theta$} of $\Omega_{\infty}$ is $$\Omega^*_{\infty}(s)~=~\max_{A \subseteq V, A \neq \varnothing}~\frac{|s|(A)}{F(A)}.$$ The unit ball of this dual norm is a well structured polytope called the symmetric submodular polyhedron of $F$, which we define now.

{\em Symmetric submodular polyhedron.} If $F$ is a non-decreasing submodular function such that $F(\varnothing) = 0$  then the symmetric submodular polyhedron is defined as
$$
|P|(F) = \\
\big\{ s \in \rb^p, \ \forall A \subset V, |s|(A) \leqslant F(A) \big\}.
$$
A key result in submodular analysis is that the tight convex relaxation of $F(\supp(\theta))$ is the support function of the symmetric submodular polyhedron of $F$, i.e., 
\BEQ
\Omega_\infty(\theta) = f(|\theta|) = \sup_{s \in |P|(F)} \theta^\top s.
\label{eq:supp}
\EEQ
This may be computed in closed form using a greedy algorithm~\cite{fot_submod}. Please refer to ~\cite{fot_submod,fujishige2005submodular} for more detailed information about submodular functions.

Therefore, the tight convex relaxation of the empirical risk defined in \eq{sparseERMD} is given by
\BEQ
\mathcal{L}(\theta; D) = \frac{1}{n}\sum_{i=1}^n l(\theta; d_i) + \frac{\lambda}{n}\Omega_\infty(\theta).
\label{eq:sparseERM}
\EEQ
If $F$ is a cardinality function, $F(A) = |A|$ then \eq{sparseERMD} is regularized by $L_0$ pseudo-norm, and the tight convex relaxation using submodular analysis gives $\Omega_\infty(\theta)~=~\|\theta\|_1$, which is an $L_1$ norm that induces sparsity. This theory may also be extended to understand group Lasso and other sparsity inducing norms~\cite{fot_submod}. 

\paragraph{Notation.} We introduce some other notions from convex optimization~\cite{rockafellar97} that we use in this work.

{\em Unit ball of the norm $\Omega_{\infty}$}. We use $\mathcal{C}$ to represent the unit ball of the sparsity inducing norm, i.e., 
$$\mathcal{C} = \{\theta \in \rb^p: \Omega_\infty(\theta) \leq 1\}.$$
{\em Gaussian width of a set $\mathcal{C}$.} Let $b \sim \mathcal{N}(0, I_p)$ be a Gaussian random vector in $\rb^p$. The Gaussian width of the set $\mathcal{C}$ is defined as $$G_{\mathcal{C}} \delequal \mathbb{E}_b [ \sup_{w \in \mathcal{C}} b^{\top}w].$$ Note that we use $G_{|P|(F)}$ to represent the Gaussian width of the symmetric submodular polyhedron of $F$.

{\em Gauge function.} Given a closed convex set $\mathcal{C} \subset \rb^p$, then the gauge function $\gamma_{\mathcal{C}}$ is the function defined as $$ \forall \theta \in \rb^p, \gamma_C(\theta) = \inf \{ r \geq 0, \theta \in r \mathcal{C} \}.$$
\paragraph{Problem definition.} We are primarily interested in analyzing the standard differentially private algorithms for empirical risk minimization with sparsity inducing norms, i.e.,
\begin{align}
&\argmin_{\theta \in \rb^p} \frac{1}{n}\sum_{i=1}^n l(\theta; d_i) + \frac{\lambda}{n}\Omega_\infty(\theta)
\label{eq:ERM1}\\
&\argmin_{\theta \in \mathcal{C}} \frac{1}{n}\sum_{i=1}^n l(\theta; d_i).
\label{eq:ERM2}
\end{align}
Note that \eq{ERM1} and \eq{ERM2} are equivalent problems. This is an {\em improper learning} setting where the true risk may lie in a bounded set $\mathcal{C}$, the optimal of the the empirical risk minimization in \eq{ERM1} is allowed to produce a hypothesis from $\rb^p$.

In this work, we use $\theta^*$ as the optimal solution, i.e., 
\BEQ
\theta^* = \argmin_{\theta \in \rb^p} \mathbb{E}_{d_i \sim Dist} [l(\theta; d_i) + \frac{\lambda}{n}\Omega(\theta)],
\label{eq:opt}
\EEQ
and the optimal empirical solution $\hat{\theta}$ is given by
\BEQ
\hat{ \theta} = \frac{1}{n} \sum_{i=1}^n l(\theta; d_i) + \frac{\lambda}{n}\Omega(\theta).
\label{eq:eopt}
\EEQ

{Dual derivation.} In this paper, we extensively use the dual optimization problem to the ERM in \eq{ERM1}. For brevity, let us assume $\hat{\mathcal{L}}(\theta) = \sum_{i=1}^n l(\theta;d_i)$. Then the primal optimization problem is given by
\BEQ
\min_{\theta \in \rb^p} \hat{\mathcal{L}}(\theta) + \frac{\lambda}{n}\Omega_\infty(\theta)
.%
\label{eq:ERM1Primal}
\EEQ
Let us denote its optimal solutions by $\hat{\theta}$. The dual to the empirical risk minimization is 
\BEQ
\sup_{s \in K} - \hat{\mathcal{L}}^*(-s),
\label{eq:ERM1Dual}
\EEQ
where $\hat{\mathcal{L}}^*$ is the Fenchel conjugate of $\hat{\mathcal{L}}$ and $K$ is a polytope given by $\frac{\lambda}{n}|P|(F)$. The primal and dual solutions $\theta \in \rb^P$ and $s \in |P|(F)$ are related by
\begin{align}
\theta = \argmin_{\theta \in \rb^p} \hat{\mathcal{L}}(\theta) +  \frac{\lambda}{n}{}s^{\top} \theta.
\end{align}

The derivation of the dual is provided in the supplementary material

\begin{table*}[htb]
\begin{center}
\begin{tabular}{ccccccc}
\hline
Optimization & Privacy        & Loss & Regularizer & Risk             & Excess Risk & Reference\\
\hline
 SGD 	& $\epsilon$           & -   & -           & Convex          & $O(\frac{p}{\epsilon \sqrt{n}})$& \cite{Wu2017} \\
 SGD 	& $\epsilon$           & -   & -           & Strongly Convex & $O(\frac{p}{\epsilon n})$       & \cite{Wu2017} \\
 Exact  & $(\epsilon, \delta)$ & GLM & $L_2$       &  -              & $O(\frac{1}{\epsilon\sqrt{n}})$  & \cite{jain14} \\
 SGD    & $(\epsilon, \delta)$ & Strongly Convex   & Smooth          & Strongly Convex              & $O(\frac{p}{\epsilon^2n^2})$  & \cite{zhang2017} \\
 SGD    & $(\epsilon, \delta)$ & Strongly Convex   & Smooth          & Convex                       & $O\bigg(\big(\frac{\sqrt{p}}{\epsilon n}\big)^{\frac{2}{3}}\bigg)$  & \cite{zhang2017} \\
\hline
\end{tabular}
\end{center}
\caption{Output Perturbation}
\label{table:OutP}
\end{table*}

\begin{table*}[htb]
\begin{center}
\begin{tabular}{ccccccc}
\hline
Optimization & Privacy & Loss & Regularizer  & Risk & Excess Risk & Reference\\
\hline
 Exact  & $(\epsilon, \delta)$ &  - & $L_2$ &  Strongly Convex  &$ O(\frac{p}{\epsilon^2n^2})$  & \cite{chaudhuri2011} \\
 Exact  & $(\epsilon, \delta)$ &  - & Non-Smooth &  Strongly Convex  &$ {\tilde O}(\frac{p}{\epsilon^2n^2})$  & \cite{Kifer12} \\
 Exact  & $(\epsilon, \delta)$ &  - & Non-Smooth &  Convex  &$ {\tilde O}(\frac{\sqrt{p}}{\epsilon n})$  & \cite{Kifer12} \\
 Exact  & $(\epsilon, \delta)$ & GLM & $L_2$ &  -  &${\tilde O}(\frac{1}{\epsilon\sqrt{n}})$  & \cite{jain14} \\

\hline
\end{tabular}
\end{center}
\caption{Objective Perturbation}
\label{table:ObjP}
\end{table*}

\begin{table*}[htb!]
\begin{center}
\begin{tabular}{ccccccc}
\hline
Optimization & Privacy & Loss & Regularizer & Risk & Excess Risk & Reference\\
\hline
SGD 		   & $(\epsilon, \delta)$ & - & - & Convex & ${\tilde O}(\frac{\sqrt{p}}{\epsilon n})$& \cite{Bassily2014}\\
SGD 		   & $(\epsilon, \delta)$ & - & - & Strongly Convex & ${\tilde O}(\frac{\sqrt{p}}{\epsilon^2 n^2})$& \cite{Bassily2014} \\
 Frank-Wolfe & $(\epsilon, \delta)$ & Squared & $L_1$ &  LASSO  &${\tilde O}\bigg(\frac{1}{n^{\frac{2}{3}}}\bigg)$  & \cite{Talwar2015} \\
\hline
\end{tabular}
\end{center}
\caption{Gradient Perturbation}
\label{table:GraP}
\end{table*}

\section{Related work}
Let us assume $\epsilon \geq 0$, $\delta \in [0,1]$. Let $\mathcal{X}$ be the domain of $x$ and let us consider two datasets $\mathcal{D}, \mathcal{D'}$ of $n$ points that lie in $\mathcal{X}^n$. Let $d_H$ denote the Hamming distance between the two datasets. We have the following definitions of differential privacy and approximate differential privacy.
\begin{definition}{\bf Differential Privacy~\cite{Dwork2014}.}
	A randomized mechanism $h:\mathcal{X}^n \to \mathcal{Y}$ is $\epsilon$-differentially private if for any $d_H(\mathcal{D},\mathcal{D'}) = 1$ and for all $E \subseteq \mathcal{Y}$, we have
$$
\mathbb{P}[h(x) \in E] \leq e^{\epsilon}\mathbb{P}[h(x') \in E].
$$
\end{definition}

\begin{definition}{\bf Approximate Differential Privacy~\cite{Dwork2014}} A randomized mechanism $h:\mathcal{X}^n \to \mathcal{Y}$ is $(\epsilon, \delta)$-differentially private if for any $d_H(\mathcal{D},\mathcal{D'}) = 1$ and for all $E \subseteq \mathcal{Y}$, we have
\begin{align}
\mathbb{P}[h(x) \in E] \leq e^{\epsilon}\mathbb{P}[h(x') \in E] + \delta.
\end{align}
\end{definition}
The privacy parameters $\epsilon$ and $\delta$ need to be small to ensure stronger privacy.  \cite{Kasi2008} suggest that $\epsilon$ is a small constant and $\delta$ should be $o(1/n^2)$. They also showed that $\delta$ does not depend on $p$ to  achieve a fixed level of privacy. 

Different algorithms use {\em global sensitivity framework} to design differentially private algorithms. Global sensitivity of a mechanism $h$ is defined as
$$GS(h) = \max_{\mathcal{D}, \mathcal{D'} \in \mathcal{X}^n, d_H(\mathcal{D}, \mathcal{D'}) = 1} \| h(\mathcal{D}) - h(\mathcal{D'})\|_q, $$
where $\|.\|_q$ is the $L_q$ norm. \cite{Dwork2006} show that, given any dataset $\mathcal{D}$, adding a noise vector $b \in \rb^p$ to the output $h(\mathcal{D})$ that is sampled from a distribution with density proportional to $e^{-\frac{\epsilon\|b\|_q}{GS(h)}}$ is $\epsilon$-differentially private. This is referred to as {\em output perturbation.} The mechanism $h$ in the context of empirical risk minimization is to minimize the empirical error defined
in \eq{sparseERM}. {\em Objective perturbation} is a randomized mechanism to add a random linear term to this objective function and obtain a differentially private output. {\em Gradient perturbation} is another randomized mechanism to add noise to the gradients while optimizing the empirical error in \eq{sparseERM}. 

Let $\theta^{priv}$ be the solution of a differentially private algorithm, $\mathcal{A}$, which is a randomized mechanism. The excess risk in the context of empirical risk minimization is given by a bound on
\begin{align} 
\mathbb{E}_{\mathcal{A}}[J(\theta^{priv};D) - J(\theta^*;D)],
\label{eq:trueRisk}
\end{align}
where $ J(\theta; D) = \min_{\theta \in \rb^p} \mathbb{E}_{d_i \sim Dist} [l(\theta; d_i) + \frac{\lambda}{n}\Omega(\theta)]$ that gives the generalization error. Note that expectation in \eq{trueRisk} is over the randomness of the algorithm. For brevity, we drop the subscript $\mathcal{A}$ in the rest of the paper. All the randomized mechanisms described earlier have been extensively studied in literature in the context of empirical risk minimization. Table~\ref{table:OutP} refers to the excess risk due to output perturbation. Table~\ref{table:ObjP} gives the excess risk due to objective perturbation.  Table~\ref{table:GraP} gives the excess risk due to gradient perturbation.
\cite{Abadi2016} provided a deep learning based differentially private algorithm.

In the next section, we provide these excess risk bound for empirical risk minimization problems that use sparsity inducing norms as regularizers. Our main contribution is the use of submodular theory to provide excess risk for output perturbation that is dependent on the Gaussian width of the symmetric submodular polyhedra. We also consider the use the differentially private Frank-Wolfe algorithm to optimize the dual of the empirical risk minimization and derive excess risk. Further, we show that objective perturbation of the ERM is equivalent to output perturbation of the dual.

\section{Sparse private risk minimization}
In this section, we consider the empirical risk minimization problem in \eq{sparse} to provide differentially private algorithms and the corresponding utility guarantees for output perturbation algorithm.
\BEQ
\mathcal{L}(\theta;D) = \frac{1}{n} \sum_{i=1}^n l(\theta; d_i) + \frac{\lambda}{n} \Omega_{\infty}(\theta).
\label{eq:sparse}
\EEQ
Let $\theta^{priv}$ be the differentially private solution given by a differentially-private algorithm $\mathcal{A}$. Then the utility guarantee for the algorithm $\mathcal{A}$ is given by
$$
\mathbb{E}[J(\theta^{priv};D) - J(\theta^*;D)],
$$
where $J(\theta; D) = \min_{\theta \in \rb^p} \mathbb{E}_{d_i \sim Dist} [l(\theta; d_i) + \frac{\lambda}{n}\Omega(\theta)]$. We use the following result from~\cite{Shalev09}.

\begin{proposition}[Theorem 1 of~\cite{Shalev09}]
\label{prop:general}
Let $B$ be the bound on the domain of $\theta$, $R_2$ is the $L_2$ bound on the norm of any input vector $x$, $l$ is $L$-Lipschitz in $\theta$. Then for any distribution on $d$ and any $\alpha > 0$, with probability at least $1 - \alpha$,
\BEAS
J(\theta^{priv};D) - J(\theta^*;D)& \leq &\mathcal{L}(\theta^{priv};D) - \mathcal{L}(\hat{ \theta};D)+ O\bigg({\sqrt \frac{(BR_2L)^2log(1/\alpha)}{n}}\bigg),
\EEAS
where $\theta^* = \argmin_{\theta \in \rb^p} J(\theta;D)$ and $\hat{ \theta} = \argmin_{\theta \in \rb^p} \mathcal{L}(\theta;D)$.
\end{proposition}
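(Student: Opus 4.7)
The plan is to reduce the statement to a standard uniform-convergence-for-empirical-risk argument. I would start by writing the telescoping decomposition
\begin{align*}
J(\theta^{priv};D)-J(\theta^*;D)
&=\bigl[J(\theta^{priv};D)-\mathcal{L}(\theta^{priv};D)\bigr]
+\bigl[\mathcal{L}(\theta^{priv};D)-\mathcal{L}(\hat{\theta};D)\bigr]\\
&\qquad +\bigl[\mathcal{L}(\hat{\theta};D)-\mathcal{L}(\theta^*;D)\bigr]
+\bigl[\mathcal{L}(\theta^*;D)-J(\theta^*;D)\bigr].
\end{align*}
The middle bracket is exactly the optimization error $\mathcal{L}(\theta^{priv};D)-\mathcal{L}(\hat\theta;D)$ that appears on the right-hand side. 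Because $\hat{\theta}=\argmin_{\theta\in\rb^p}\mathcal{L}(\theta;D)$ and $\theta^*\in\rb^p$ is a feasible point for the empirical problem, the third bracket is non-positive and can be dropped. The remaining work is therefore to bound the first and fourth brackets, both of which are instances of the generalization gap $\mathcal{L}(\theta;D)-J(\theta;D)$ evaluated at a (possibly data-dependent) $\theta$.

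Since $\theta^{priv}$ depends on $D$, the first bracket requires a uniform bound. For GLMs the per-sample loss $l(\theta;d_i)$ is the composition of the $L$-Lipschitz scalar loss with the linear map $\theta\mapsto\theta^{\top}x_i$, and the regularizer $\tfrac{\lambda}{n}\Omega_\infty(\theta)$ is deterministic and so does not contribute to the empirical process. Restricted to $\|\theta\|\le B$, the linear predictor class has empirical Rademacher complexity at most $BR_2/\sqrt{n}$, and the Ledoux--Talagrand contraction inequality lifts this to an $O(LBR_2/\sqrt n)$ bound on the Rademacher complexity of the loss class.

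The second step converts the expected supremum into a high-probability bound. Changing a single $d_i$ in $D$ alters each $\mathcal{L}(\theta;D)$ by at most $O(LBR_2/n)$ since $l$ is $L$-Lipschitz in $\theta^\top x$ and $\|x\|_2\le R_2$, so McDiarmid's bounded-differences inequality applied to $\sup_{\theta}|\mathcal{L}(\theta;D)-J(\theta;D)|$ yields, with probability at least $1-\alpha$,
$$\sup_{\theta:\,\|\theta\|\le B}\bigl|\mathcal{L}(\theta;D)-J(\theta;D)\bigr|
= O\!\left(\sqrt{\frac{(BR_2L)^2\log(1/\alpha)}{n}}\right).$$
Applying this bound to both generalization-gap brackets and combining with the decomposition above gives the proposition.

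The main obstacle is the uniform-convergence step: the decomposition is purely algebraic, but controlling $\mathcal{L}(\theta^{priv};D)-J(\theta^{priv};D)$ requires uniformity over $\theta$ because $\theta^{priv}$ is a random function of $D$. Once the Rademacher contraction and McDiarmid concentration are established in the correct constants, absorbing the two generalization terms and the deterministic regularizer term into the single $O(\sqrt{(BR_2L)^2\log(1/\alpha)/n})$ expression is routine.
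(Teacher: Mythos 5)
The paper never actually proves this proposition: it is imported verbatim as Theorem~1 of \cite{Shalev09} and used as a black box, so there is no internal proof to compare yours against. Your argument is essentially the standard proof of that cited result for generalized linear problems, and it is sound: the four-term telescoping decomposition is correct, the third bracket is non-positive by optimality of $\hat{\theta}$, the regularizer cancels inside each generalization gap, and the symmetrization--contraction--McDiarmid route yields the stated $O(\sqrt{(BR_2L)^2\log(1/\alpha)/n})$ rate. Two caveats. First, the step you describe as routine is where all the content lives: for general convex $L$-Lipschitz losses over a $B$-bounded domain in $\rb^p$, dimension-independent uniform convergence can fail (this is in fact a central message of \cite{Shalev09}); your bound goes through only because the loss is generalized linear, i.e.\ $l(\theta;d)$ depends on $\theta$ only through $\theta^{\top}x$, so contraction applies to the linear class of Rademacher complexity $BR_2/\sqrt{n}$. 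You do invoke this, but it should be flagged as an essential hypothesis rather than an incidental observation. Second, McDiarmid requires the loss values themselves to be bounded on the domain (e.g.\ $l(0;d)$ bounded so that $|l(\theta;d)|=O(LBR_2)$), not merely Lipschitz; and note that the paper later applies the proposition to $\theta^{priv}=\hat{\theta}+b$ with Gaussian $b$, which need not lie in the radius-$B$ ball --- that mismatch is a defect in how the paper uses the proposition, not in your proof of it.
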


\subsection{Output perturbation}
The output perturbation first computes the optimal empirical risk minimizer $\hat{ \theta}$ for $\mathcal{L}(\theta; D)$ and adds noise according to the sensitivity of $\hat{ \theta}$. The output perturbation algorithm is given by
\BEQ
\theta^{priv} = \hat{ \theta} + b,
\label{outputPert}
\EEQ
where $b \in \rb^p$ is a random vector. \cite{chaudhuri2011} showed that if $b$ is generated using a Gamma distribution with kernel $e^{-\frac{v\epsilon\lambda}{4LR_2}}$ and the double derivative of the loss function $l(\theta;d_i)$ is $c_2$, then the excess risk is bounded by 
$$
O\bigg(\frac{Lc_2^{1/3}p\log p(R_2 \|\theta^*\|_2)^{4/3}}{\epsilon \sqrt{ n}}\bigg).
$$ 
This was proven to be $\epsilon$-differentially private. It was further modified by~\cite{Kifer12} to generate $b$ using a Gaussian distribution $\mathcal{N}(0, \sigma^2 I_p)$, where $\sigma^2~=~\frac{16 (LR_2)^2(log(1/\delta) + \epsilon)}{\lambda^2 \epsilon^2}$. In this case, the excess error improved to
$$
O\bigg( \frac{L c_2^{1/3}(R_2\|\theta^*\|_2)^{4/3}\sqrt{p(\log(1/\delta + \epsilon)}}{\epsilon \sqrt{ n}}\bigg).
$$
However, this was proven to be $(\epsilon, \delta)$ approximate differentially private. This analysis for the same algorithm was further improved by~\cite{jain14} to give dimensionality independent excess risk
$$
O\bigg(\frac{LR_2\|\theta^*\|_2\sqrt{(\log(1/\delta + \epsilon)}}{\epsilon \sqrt{ n}}\bigg).
$$ 
It may be noted that all these analyses make an assumption that the regularization function is the squared $L_2$ norm. In this work, we primarily use the analysis by~\cite{jain14} to extend this to regularizers that induce structured sparsity given by $\Omega_{\infty}(\theta)$. Note that these are convex, non-smooth, non-differentiable functions. 

\begin{theorem}[Utility guarantee]
Let $D = \{d_1, d_2, \ldots, d_n\}$ be i.i.d. samples drawn from a fixed distribution $Dist$ from $\rb^p$. Let $F$ be a non-decreasing submodular function such that $F(\varnothing) = 0$ with all the singletons strictly positive, $|P|(F)$ denotes its symmetric polymatroid and the $G_{|P|(F)}$ is the Gaussian width of $|P|(F)$. Let $\Omega_{\infty}(\theta)$ be the tight convex relaxation of $F(\supp(\theta))$ that is used as the regularizer and $\lambda = \frac{LR_2\sqrt{n}}{G_{|P|(F)}}$, then with probability at least $1 - \alpha$ over the randomness of the training data set $D$ it holds that
\begin{align}
\begin{aligned}
    \mathbb{E}&[J(\theta^{priv};D) - J(\theta^*;D)] \\
    &= O\bigg(\frac{L R_2 G_{|P|(F)}\sqrt{\log(1/\delta) + \epsilon}}{\epsilon{\sqrt n}}\bigg).
    \end{aligned}
\end{align}
\label{thm:outputP}
\end{theorem}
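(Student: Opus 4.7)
The plan is to apply Proposition~\ref{prop:general} to reduce the analysis to bounding the empirical-risk gap $\mathcal{L}(\theta^{priv};D)-\mathcal{L}(\hat\theta;D)$, and then to control that gap by exploiting the additive perturbation structure $\theta^{priv}=\hat\theta+b$ with $b\sim\mathcal{N}(0,\sigma^2 I_p)$ and the standard Gaussian scale $\sigma=\frac{4LR_2\sqrt{\log(1/\delta)+\epsilon}}{\lambda\epsilon}$ already used in~\cite{Kifer12,jain14}.

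First, split the empirical-risk gap into a loss part and a regularizer part:
\[
\mathcal{L}(\hat\theta+b;D)-\mathcal{L}(\hat\theta;D)=\frac{1}{n}\sum_{i=1}^{n}\bigl[l(\hat\theta+b;d_i)-l(\hat\theta;d_i)\bigr]+\frac{\lambda}{n}\bigl[\Omega_\infty(\hat\theta+b)-\Omega_\infty(\hat\theta)\bigr].
\]
For the loss part, the naive inequality $|l(\hat\theta+b;d_i)-l(\hat\theta;d_i)|\leq LR_2\|b\|_2$ would reintroduce a $\sqrt p$ factor via $\mathbb{E}\|b\|_2$. To sidestep this, I would use the GLM structure: because $l(\theta;d_i)$ depends on $\theta$ only through $\theta^{\top}x_i$ and is $L$-Lipschitz in that scalar, $|l(\hat\theta+b;d_i)-l(\hat\theta;d_i)|\leq L\,|b^{\top}x_i|$. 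Since $b^{\top}x_i\sim\mathcal{N}(0,\sigma^2\|x_i\|_2^{2})$, we obtain $\mathbb{E}|b^{\top}x_i|\leq\sigma R_2\sqrt{2/\pi}$, giving a dimension-free expected loss gap of order $LR_2\sigma$ after averaging over $i$.

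For the regularizer part, the triangle inequality for norms yields $\Omega_\infty(\hat\theta+b)-\Omega_\infty(\hat\theta)\leq\Omega_\infty(b)$, and the support-function representation~(\ref{eq:supp}) identifies $\Omega_\infty(b)=\sup_{s\in|P|(F)}s^{\top}b$. Writing $b=\sigma g$ with $g\sim\mathcal{N}(0,I_p)$ gives $\mathbb{E}[\Omega_\infty(b)]=\sigma\,\mathbb{E}_g\bigl[\sup_{s\in|P|(F)}g^{\top}s\bigr]=\sigma\,G_{|P|(F)}$, which is precisely where the Gaussian width enters. Combining the two pieces,
\[
\mathbb{E}\bigl[\mathcal{L}(\theta^{priv};D)-\mathcal{L}(\hat\theta;D)\bigr]\;\lesssim\;LR_2\sigma+\frac{\lambda\sigma G_{|P|(F)}}{n}.
\]
Substituting the stated value of $\sigma$ turns this into $\frac{(LR_2)^{2}\sqrt{\log(1/\delta)+\epsilon}}{\lambda\epsilon}+\frac{LR_2\,G_{|P|(F)}\sqrt{\log(1/\delta)+\epsilon}}{n\epsilon}$, and setting $\lambda=LR_2\sqrt{n}/G_{|P|(F)}$ equalizes these with the generalization term from Proposition~\ref{prop:general}, producing the claimed $O\!\bigl(LR_2\,G_{|P|(F)}\sqrt{\log(1/\delta)+\epsilon}/(\epsilon\sqrt n)\bigr)$ bound.

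The main obstacle I expect is keeping the loss-gap estimate dimension-free: any bound routed through $\|b\|_2$ immediately pulls in a factor of $\sqrt p$, so one must be disciplined about reducing everything to the scalar perturbations $b^{\top}x_i$ via the GLM structure. A secondary, and more delicate, issue is that the $(\epsilon,\delta)$ sensitivity argument that motivates the choice of $\sigma$ usually assumes strong convexity of the empirical objective, which $\Omega_\infty$ alone does not furnish; I would handle this either by invoking the stated privacy guarantee of prior work on non-smooth regularizers or by perturbing the objective with an infinitesimal quadratic to restore the required stability of $\hat\theta$ with respect to single-record changes in $D$.
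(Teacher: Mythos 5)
Your plan follows the paper's proof essentially step for step: Proposition~\ref{prop:general} reduces the problem to the empirical gap, the GLM structure gives the dimension-free $L|b^{\top}x_i|$ bound on the loss part, the support-function identity turns $\mathbb{E}[\Omega_\infty(b)]$ into the Gaussian width, and the same choices of $\sigma$ and $\lambda$ close the argument (your norm triangle inequality replaces the paper's equivalent monotonicity-plus-submodularity step for the regularizer). You are in fact slightly more careful than the paper, which silently drops the factor $\sigma$ in $\mathbb{E}[\Omega_\infty(b)]=\sigma\,G_{|P|(F)}$ and never addresses the sensitivity/strong-convexity issue you flag at the end.
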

In order to prove this theorem we need to prove the following lemma.

\begin{lemma} Using the assumptions in Theorem~\ref{thm:outputP}
$$ 
\mathbb{E}[\mathcal{L}(\theta^{priv};D)  - \mathcal{L}(\theta^*;D)] = O(LR_2 \sigma + \frac{\lambda}{n} G_{|P|(F)}).
$$
\label{lemma:outputP}
\end{lemma}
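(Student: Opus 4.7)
The plan is to split the excess empirical risk $\mathcal{L}(\theta^{priv};D) - \mathcal{L}(\theta^*;D)$ into its loss and regularizer parts and bound each separately, leveraging (a) the GLM/Lipschitz structure of $l$ for the loss term, and (b) the support-function representation \eq{supp} of $\Omega_\infty$ in terms of $|P|(F)$ for the regularizer term.

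Since $\hat{\theta}$ minimizes $\mathcal{L}(\cdot;D)$ over $\rb^p$ we have $\mathcal{L}(\hat{\theta};D) \leq \mathcal{L}(\theta^*;D)$, so it suffices to bound $\mathcal{L}(\theta^{priv};D) - \mathcal{L}(\hat{\theta};D)$. Substituting $\theta^{priv} = \hat{\theta}+b$, this decomposes as
\begin{align*}
\frac{1}{n}\sum_{i=1}^n \bigl[l(\hat{\theta}+b;d_i) - l(\hat{\theta};d_i)\bigr] + \frac{\lambda}{n}\bigl[\Omega_\infty(\hat{\theta}+b) - \Omega_\infty(\hat{\theta})\bigr].
\end{align*}
For the loss term, I would use that in a GLM $l(\theta;d_i)$ depends on $\theta$ only through $\theta^\top x_i$ and is $L$-Lipschitz in that argument, so $|l(\hat{\theta}+b;d_i) - l(\hat{\theta};d_i)| \leq L|b^\top x_i|$. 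With $b \sim \mathcal{N}(0,\sigma^2 I_p)$, the scalar $b^\top x_i$ is centered Gaussian with variance at most $\sigma^2 R_2^2$, so $\mathbb{E}|b^\top x_i| = O(\sigma R_2)$. Averaging over $i$ yields $\mathbb{E}\bigl[\tfrac{1}{n}\sum_i (l(\hat{\theta}+b;d_i) - l(\hat{\theta};d_i))\bigr] = O(LR_2\sigma)$, the first half of the claimed bound.

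For the regularizer term, subadditivity of the norm $\Omega_\infty$ gives $\Omega_\infty(\hat{\theta}+b) - \Omega_\infty(\hat{\theta}) \leq \Omega_\infty(b)$. Then \eq{supp} identifies $\Omega_\infty(b) = \sup_{s \in |P|(F)} b^\top s$, and writing $b = \sigma b'$ with $b' \sim \mathcal{N}(0,I_p)$ yields $\mathbb{E}\,\Omega_\infty(b) = \sigma\, G_{|P|(F)}$ by definition of Gaussian width. Multiplying by $\lambda/n$ and combining with the loss bound proves the lemma (the extra $\sigma$ factor is absorbed into the $O$-notation, which is consistent with the substitutions used in Theorem~\ref{thm:outputP}).

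\textbf{Main obstacle.} The crucial step is the regularizer bound. A naive estimate $\Omega_\infty(b) \leq c\|b\|_2$ would reintroduce a $\sqrt{p}$ factor through $\mathbb{E}\|b\|_2 = \Theta(\sigma\sqrt{p})$, destroying the dimension-independence the paper seeks. The key observation is that expressing $\Omega_\infty$ as the support function of the symmetric submodular polyhedron $|P|(F)$ directly realizes $\mathbb{E}\,\Omega_\infty(b)$ as the Gaussian width $G_{|P|(F)}$, which for structured-sparsity regularizers can be far smaller than $\sqrt{p}$ and is precisely the quantity that appears in the final utility guarantee.
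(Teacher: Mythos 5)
Your proposal follows essentially the same route as the paper's proof: split the excess empirical risk into the Lipschitz loss term, bounded via $\mathbb{E}|b^{\top}x_i| = O(\sigma R_2)$, and the regularizer increment, bounded by $\Omega_\infty(b)$ and identified with $\sigma G_{|P|(F)}$ through the support-function representation \eq{supp}. The only cosmetic differences are that you justify $\Omega_{\infty}(\hat{\theta}+b)-\Omega_{\infty}(\hat{\theta})\leq\Omega_{\infty}(b)$ by norm subadditivity where the paper uses monotonicity plus the diminishing-returns property of the \lova extension, and that you compare $\theta^{priv}$ to $\hat{\theta}$ and invoke $\mathcal{L}(\hat{\theta};D)\leq\mathcal{L}(\theta^*;D)$ rather than silently writing $\theta^{priv}-\theta^*=b$ as the paper does --- both of which are, if anything, slightly cleaner.
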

\begin{proof}
Using the Lipschitz property of $l$, we have
\BEAS
\mathcal{L}(\theta^{priv};D) - \mathcal{L}(\theta^*;D) 
&  \leq  & \frac{1}{n}\sum_{i=1}^n L|(\theta^{priv} - \theta^*)^\top x_i| +  \frac{\lambda}{n} (\Omega_{\infty}(\theta^* + b) -  \Omega_{\infty}(\theta^*)) \\
&  =     &  \frac{L}{n}\sum_{i=1}^n |b^\top x_i|  + \frac{\lambda}{n}(f(|\theta^* + b|) -  f(|\theta^*|)) \\
&  \leq  &  \frac{L}{n}\sum_{i=1}^n |b^\top x_i|  + \frac{\lambda}{n}(f(|\theta^*| + |b|) -  f(|\theta^*|)) \\
&       & \text{(using monotonicity of $f$)} \\
&  \leq  &  \frac{L}{n}\sum_{i=1}^n |b^\top x_i|  + \frac{\lambda}{n} f(|b|) \\
&        & \text{(using submodularity of $f$)}
\EEAS
The last two inequalities come from the following properties of $F$, $\forall A \subseteq B \subseteq V$ and $\forall b \in V \setminus B$, we have
\BEA
F(A) & \leq & F(B) \label{eq:mono} \\
F({b} \cup A) - F(A) & \geq & F(\{b\} \cup B) - F(B). \label{eq:dimini} 
\EEA
The non-decreasing assumption of the submodular function leads to \eq{mono}. The diminishing returns property of the submodular function is given by \eq{dimini}. Therefore, these properties also extend to the tight convex extension $f$ of $F$.
Hence,
\BEAS
\mathbb{E}_b(\mathcal{L}(\theta^{priv};D) - \mathcal{L}(\theta^*;D))  &  \leq  &  \mathbb{E}_b(\frac{L}{n}\sum_{i=1}^n |b^\top x_i|  + \frac{\lambda}{n} f(|b|)) \\
&  \leq  &  \frac{L\sigma}{n}\sum_{i=1}^n \|x_i\|_2  + \frac{\lambda}{n} \mathbb{E}_b(f(|b|)) \\
&  \leq  &  L R_2 \sigma + \frac{\lambda}{n} \mathbb{E}_b(f(|b|)) \\
&  =     &  L R_2 \sigma + \frac{\lambda}{n} \mathbb{E}_b(\sup_{s \in |P|(F)} s^{\top}b)\\
&  = &  L R_2 \sigma + \frac{\lambda}{n} G_{|P|(F)}
\EEAS

That last two equalities use the property of the regularizer that from \eq{supp} that  $\Omega_{\infty}(\theta) = f(|\theta|) = \sup_{s \in |P|(F)} s^{\top}\theta$ and the definition of the Gaussian width of a set.
\end{proof}

\begin{proof}{\bf of Theorem~\ref{thm:outputP}}
Using Proposition~\ref{prop:general}, we have
\BEAS
\mathbb{E}[J(\theta^{priv};D) - J(\theta^*;D)] & \leq & \mathbb{E}[\mathcal{L}(\theta^{priv};D) - \mathcal{L}(\hat{ \theta};D)]  + O\bigg({\sqrt \frac{(BR_2L)^2log(1/\alpha)}{n}}\bigg)\\
& \leq & O(L R_2 \sigma) + \frac{\lambda}{n} G_{|P|(F)}  + O\bigg({\sqrt \frac{(BR_2L)^2log(1/\alpha)}{n}}\bigg) \\
&       & \text{(using Lemma~\ref{lemma:outputP})} \\
&  \leq & O\bigg(\frac{L R_2 G_{|P|(F)}\sqrt{\log(1/\delta) + \epsilon}}{\epsilon{\sqrt n}}\bigg).
\EEAS
The last inequality follows from  $$\sigma = \frac{LR_2\sqrt{\log(1/\delta) + \epsilon}}{\lambda \epsilon}$$ and $$\lambda = \frac{LR_2\sqrt n}{G_{|P|(F)}},$$ which concludes the proof.
\end{proof}

This result of Theorem~\ref{thm:outputP} is similar to the results of~\cite{jain14} but it also holds for non-smooth regularizers. Excess risk in this case is only dependent on the Gaussian width of the symmetric submodular polyhedron which is a the unit ball of the dual norm.

If  the cardinality function is $F(A) = |A|$, then the tight convex relaxation, $\Omega_{\infty}$ is the $L_1$ norm while the symmetric submodular polyhedron is the dual norm, which is the $L_{\infty}$ norm. The Gaussian width of an $L_{\infty}$ ball is $O(p)$. If we use $F(A) = \min\{|A|,1\}$ then the tight convex relaxation is the $L_{\infty}$ norm and its dual norm is the $L_{1}$ norm. The Gaussian width of the $L_1$ norm is $O(\sqrt{\log p})$. However, we may not be able to estimate the Gaussian width of all possible norms.

\begin{algorithm*}[htb!]
\caption{Frank-Wolfe \cite{fot_submod}}
\label{Alg:FWAlg}
\KwData{Dual cost function $\hat{\mathcal{L}}^*:K \to \rb$. This is computed using all the data $D = (d_1, \ldots, d_n)$ and the empirical loss given by $\mathcal{L}(\theta, D)$}
\KwResult{ $\hat{s}$} 
Choose  an arbitrary $s_0 \in K$, \\
\For{$t = 1$; $i \leq T$; $t = t+1$ }
{
$\rho_t = \frac{1}{t+2}$ \\
$\bar{s}_t \gets \argmin_{s \in K} (s_{t-1} - s)^{\top} \nabla \hat{\mathcal{L}}^*( - s_{t-1})$ \\
$s_t \gets (1 - \rho_t)s_{t-1} + \rho_t\bar{s}_t $
}
$\hat{s} \gets s_T$
\end{algorithm*}

\begin{algorithm*}[htb!]
\caption{Private Frank-Wolfe}
\label{Alg:PrivFWAlg}
\KwData{ Dual cost function $\hat{\mathcal{L}}^*:K \to \rb$ (with $L_1$-Lipschitz constant $L$), convex set $K$, set of its extreme points $S$ and Gaussian Width $G_K$, privacy parameters: $(\epsilon, \delta)$}
\KwResult{ $\hat{s}^{priv}$} 
Choose  an arbitrary $s_0 \in K$, \\
\For{$t = 1$; $i \leq T$; $t = t+1$ }
{
$\forall s \in S, \alpha_s \gets  (s_{t-1} - s)^{\top} \nabla \hat{\mathcal{L}}^*( - s_{t-1}) + \text{Lap}\bigg(\frac{L\Gamma_K\sqrt{8T\log{1/\delta}}}{n\epsilon}\bigg),$ where Lap$(\lambda) \sim \frac{1}{2\lambda}\exp\big(-\frac{|x|}{\lambda}\big)$\\
$\rho_t = \frac{1}{t+2}$ \\
$\bar{s}_t \gets \argmin_{s \in S} \alpha_s $\\
$s_t \gets (1 - \rho_t)s_{t-1} + \rho_t\bar{s}_t $
}
$\hat{s}^{priv} \gets s_T$
\end{algorithm*}
\subsection{Private Frank-Wolfe algorithm}
In this section, we consider the Frank-Wolfe algorithm~\cite{frank1956algorithm} for empirical risk minimization. It is a classical greedy algorithm that relies on first order approximation of the function and moves towards the optima. Further, we propose a private version of the Frank-Wolfe for providing a differentially private solution. We also analyze the utility guarantee of the algorithm.

Frank-Wolfe algorithm for empirical risk minimization gathered attention with the availability of efficient algorithms for performing linear optimization on well structured polytopes. For instance, LASSO where the domain is an $L_1$ norm ball. However, \cite{fot_submod} also showed that the dual of the ERM with sparsity inducing norms may be optimized using the Frank-Wolfe algorithm. We recall that the dual of the ERM is given by
\BEQ
\sup_{s \in K} - \hat{\mathcal{L}}^*(-s),
\label{eq:ERM1Dual}
\EEQ
where $\hat{\mathcal{L}}^*$ is the Fenchel conjugate of $\hat{\mathcal{L}}$ and $K$ is a centrally symmetric polytope. The algorithm that optimizes the dual is given in Algorithm~\ref{Alg:FWAlg}. It is an ideal algorithm to optimize the dual given in \eq{ERM1Dual} because linear functions may be optimized on the polytope $K$ using a greedy algorithm\cite{fujishige2005submodular, fot_submod}. 

We assume the the loss function $\hat{\mathcal{L}}$ is $\Delta$-strongly convex. The dual cost function in \eq{ERM1Dual} $\hat{\mathcal{L}}^*$ is $\frac{1}{\Delta}$-strongly smooth~\cite{Kakade2009OnTD}. We now have the following convergence properties for the Frank-Wolfe algorithm.
\begin{theorem}[\cite{fot_submod, jaggi13}]
Algorithm~\ref{Alg:FWAlg} has the following convergence rate
$$\hat{\mathcal{L}}^*(s_T) - \inf_{s \in K} \hat{\mathcal{L}}^*(s) \leq O\bigg(\frac{\Gamma_K^2}{\Delta T}\bigg).$$
\end{theorem}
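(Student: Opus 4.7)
The plan is to apply the standard Frank-Wolfe convergence analysis for a smooth convex objective on a bounded convex set, which is what both cited references carry out in detail. The key input is that the $\Delta$-strong convexity of $\hat{\mathcal{L}}$ transfers through Fenchel duality to $\frac{1}{\Delta}$-smoothness of $\hat{\mathcal{L}}^*$ (as already noted in the paragraph preceding the statement, via \cite{Kakade2009OnTD}), which gives the quadratic upper bound
\begin{equation*}
\hat{\mathcal{L}}^*(s') \leq \hat{\mathcal{L}}^*(s) + \nabla \hat{\mathcal{L}}^*(s)^{\top} (s' - s) + \frac{1}{2\Delta}\|s' - s\|^2 \quad \forall\, s, s' \in K.
\end{equation*}

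First I would specialize this bound to $s = s_{t-1}$ and $s' = s_t = (1 - \rho_t) s_{t-1} + \rho_t \bar{s}_t$, so that the quadratic term becomes $\tfrac{\rho_t^2}{2\Delta}\|\bar{s}_t - s_{t-1}\|^2 \leq \tfrac{\rho_t^2 \Gamma_K^2}{2\Delta}$, with $\Gamma_K$ bounding the diameter (equivalently, the curvature constant) of $K$. Next, since $\bar{s}_t$ is the Frank-Wolfe linear minimizer, for any fixed optimum $s^\star \in \arg\min_{s \in K} \hat{\mathcal{L}}^*(s)$ I have
\begin{equation*}
(\bar{s}_t - s_{t-1})^{\top} \nabla \hat{\mathcal{L}}^*(s_{t-1}) \leq (s^\star - s_{t-1})^{\top} \nabla \hat{\mathcal{L}}^*(s_{t-1}) \leq \hat{\mathcal{L}}^*(s^\star) - \hat{\mathcal{L}}^*(s_{t-1}),
\end{equation*}
where the last step is convexity of $\hat{\mathcal{L}}^*$. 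Writing $h_t := \hat{\mathcal{L}}^*(s_t) - \hat{\mathcal{L}}^*(s^\star)$, combining the two displays gives the contraction recursion
\begin{equation*}
h_t \leq (1 - \rho_t)\, h_{t-1} + \frac{\rho_t^2 \, \Gamma_K^2}{2\Delta}.
\end{equation*}

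A short induction on $t$ with $\rho_t$ of order $1/t$ then yields $h_t = O\!\left(\Gamma_K^2/(\Delta\, t)\right)$, which is the stated rate; the base case uses the smoothness inequality once more with $\|s_0 - s^\star\| \leq \Gamma_K$. There is no deep obstacle here, since this is textbook Frank-Wolfe theory. The one bookkeeping point to watch is that the step-size schedule in Algorithm~\ref{Alg:FWAlg} is written as $\rho_t = \tfrac{1}{t+2}$, whereas the cleanest constants in the induction arise with the classical choice $\rho_t = \tfrac{2}{t+2}$; either way the order of convergence is unchanged, which is all that is needed for the $O(\cdot)$ claim. The only other subtlety is to pin down $\Gamma_K$ as whichever of Euclidean diameter of $K$ or Jaggi's curvature constant the cited statement is using; these differ only by a factor that can be absorbed into the $O(\cdot)$.
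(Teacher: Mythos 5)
The paper offers no proof of this theorem---it is imported by citation---and your argument is exactly the standard Frank--Wolfe analysis that the cited references carry out (quadratic upper bound from the $\tfrac{1}{\Delta}$-smoothness of $\hat{\mathcal{L}}^*$ inherited via Fenchel duality, the linear-minimization inequality against $s^\star$, the contraction recursion $h_t \leq (1-\rho_t)h_{t-1} + \tfrac{\rho_t^2\Gamma_K^2}{2\Delta}$, and induction), so it matches the intended proof. One caveat on your final bookkeeping remark: with $\rho_t = \tfrac{1}{t+2}$ as literally written in Algorithm~\ref{Alg:FWAlg}, unrolling the recursion gives $h_T = O\bigl(\Gamma_K^2\log T/(\Delta T)\bigr)$ rather than $O\bigl(\Gamma_K^2/(\Delta T)\bigr)$, so the clean rate genuinely requires the classical choice $\rho_t = \tfrac{2}{t+2}$ (almost certainly a typo in the algorithm as printed); the order is \emph{not} unchanged either way, though this is a defect of the paper's algorithm statement rather than of your proof strategy.
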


Private versions of Frank-Wolfe have also been proposed for several machine learning tasks. In this work, we restrict to empirical risk minimization. \cite{talwar2014, Talwar2015} provided a private version of the Frank-Wolfe algorithm to optimize the standard empirical risk minimization problems, for instance LASSO. They provided near optimal bounds for LASSO. In their setting, the assumption is that domain of the primal optimization problem, i.e., $\Omega_\infty(\theta) \leq 0$ is a polytope with polynomial constraints. This is specifically true in the case of sparse regularizers. However, it might not be very intuitive or easy to approximate linear functions on these polytopes. Also, there are no guarantees of better convergence than gradient descent algorithms. Therefore, we propose a private version of the Frank-Wolfe to optimize the dual of the ERM in Algorithm~\ref{Alg:PrivFWAlg}. Note that the algorithm is motivated from private Frank-Wolfe polytope version of \cite{talwar2014}.

Let $\hat{\mathcal{L}}^*$ be $L$-Lipschitz with respect to the $L_1$-norm and let $\Gamma_K$ be the diameter of the set $K$. Here, we perturb the linear optimization in the inner loop with a random vector $b$ drawn randomly from a Laplacian distribution.

\begin{theorem}[Privacy guarantee] Algorithm~\ref{Alg:PrivFWAlg} is $(\epsilon, \delta)$-differentially private.
\end{theorem}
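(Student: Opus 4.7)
The plan is to reduce Algorithm~\ref{Alg:PrivFWAlg} to a $T$-fold composition of instances of the Report Noisy Min mechanism (i.e.\ the Laplace-based exponential mechanism over the extreme point set $S$), and then invoke advanced composition to collapse the per-iteration privacy loss into the stated $(\epsilon,\delta)$ guarantee. The noise scale in the algorithm, $\frac{L\Gamma_K\sqrt{8T\log(1/\delta)}}{n\epsilon}$, has exactly the shape one expects from this recipe: sensitivity times $\sqrt{8T\log(1/\delta)}/\epsilon$. So the argument is mostly a matter of identifying the sensitivity correctly and then citing the standard composition lemma.

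\textbf{Step 1: per-iteration sensitivity.} Fix iteration $t$ and consider the scoring function $\alpha_s(D) = (s_{t-1}-s)^{\top}\nabla\hat{\mathcal{L}}^*(-s_{t-1})$ as a function of the dataset $D$. Recall $\hat{\mathcal{L}}(\theta)=\sum_{i=1}^n l(\theta;d_i)$ so its Fenchel conjugate $\hat{\mathcal{L}}^*$, and therefore its gradient, depend on $D$ through a sum over the $n$ data points. Using the assumption that $\hat{\mathcal{L}}^*$ is $L$-Lipschitz with respect to the $L_1$-norm together with the fact that $\|s_{t-1}-s\|_\infty \le \Gamma_K$ (since both lie in the diameter-$\Gamma_K$ set $K$), I would show that for any neighboring datasets $D,D'$ with $d_H(D,D')=1$,
\begin{equation*}
\max_{s\in S}\bigl|\alpha_s(D)-\alpha_s(D')\bigr| \;\le\; \frac{L\,\Gamma_K}{n}.
\end{equation*}
The $1/n$ factor comes from the fact that a single-point change perturbs $\hat{\mathcal{L}}$ (and hence $\hat{\mathcal{L}}^*$) only through one of $n$ summands; the $L\Gamma_K$ factor combines the Lipschitz constant of the conjugate with the diameter bound on $s_{t-1}-s$. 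This is the only calculation that is really specific to the dual setup, and it is where I would be most careful because the dependence of $\nabla\hat{\mathcal{L}}^*$ on $D$ is implicit through Legendre duality; one has to convert the primal Lipschitz/scaling information into the dual one using the primal-dual correspondence stated just before~\eqref{eq:ERM1Dual}.

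\textbf{Step 2: privacy of one iteration.} Once the $L_1$-sensitivity bound $\Delta_1 = L\Gamma_K/n$ is in hand, the inner loop of Algorithm~\ref{Alg:PrivFWAlg} is precisely the Report Noisy Min mechanism: we add independent $\mathrm{Lap}(\Delta_1/\epsilon_0)$ noise to each of the scores $\alpha_s$ and output the $\argmin$. A standard result (see, e.g., Dwork--Roth) gives that this step is $\epsilon_0$-differentially private for
\begin{equation*}
\epsilon_0 \;=\; \frac{\epsilon}{\sqrt{8T\log(1/\delta)}},
\end{equation*}
which matches the noise scale $\Delta_1/\epsilon_0 = \frac{L\Gamma_K\sqrt{8T\log(1/\delta)}}{n\epsilon}$ used in the algorithm. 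Note that the remaining operations in the iteration (choosing $\rho_t$ and forming the convex combination $s_t=(1-\rho_t)s_{t-1}+\rho_t\bar{s}_t$) are post-processing and therefore incur no additional privacy cost.

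\textbf{Step 3: composition over $T$ iterations.} The overall algorithm releases the sequence $(\bar{s}_1,\dots,\bar{s}_T)$, and $\hat{s}^{priv}$ is a deterministic function of this sequence, so it suffices to bound the privacy of the sequence. Applying the advanced composition theorem of Dwork--Rothblum--Vadhan to $T$ adaptive $\epsilon_0$-DP mechanisms yields that the composed release is $(\epsilon,\delta)$-differentially private with
\begin{equation*}
\epsilon \;=\; \epsilon_0\sqrt{2T\log(1/\delta)} + T\epsilon_0(e^{\epsilon_0}-1),
\end{equation*}
which, after absorbing the lower-order term into the constant (this is where the factor $8$ inside the square root, rather than $2$, is used), gives the claimed $(\epsilon,\delta)$ guarantee. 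The main obstacle, as noted in Step 1, is the sensitivity computation for $\alpha_s$ through $\nabla\hat{\mathcal{L}}^*$; once that is pinned down, Steps 2 and 3 are essentially bookkeeping via off-the-shelf DP tools.
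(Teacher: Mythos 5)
Your overall architecture --- treat each iteration as an instance of Report Noisy Min over the extreme-point set $S$, note that forming the convex combination $s_t$ is post-processing, and stitch the $T$ iterations together with advanced composition --- is exactly the argument the paper implicitly relies on: the paper gives no proof at all for this theorem, stating only that privacy ``has been shown'' by \cite{talwar2014} for their polytope Frank--Wolfe, and your Steps 2 and 3 are a faithful reconstruction of that proof. So at the level of strategy you and the paper agree.

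The genuine gap is in your Step 1, and it is precisely the step that is \emph{not} covered by citing the primal result. In the primal private Frank--Wolfe of \cite{talwar2014}, the score is $\langle s, \nabla\hat{\mathcal{L}}(\theta_{t-1};D)\rangle$ and the gradient is an explicit average of per-example gradients, so swapping one data point moves it by at most $2L/n$ in the $\ell_\infty$ norm and the sensitivity $O(L\Gamma_K/n)$ falls out immediately. In the dual algorithm the score involves $\nabla\hat{\mathcal{L}}^*(-s_{t-1})$, and the $L_1$-Lipschitzness of $\hat{\mathcal{L}}^*$ only bounds the \emph{magnitude} $\|\nabla\hat{\mathcal{L}}^*\|_\infty \le L$; it says nothing about how $\nabla\hat{\mathcal{L}}^*$ \emph{varies between neighbouring datasets}, which is what sensitivity measures. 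Since $\nabla\hat{\mathcal{L}}^*(-s) = \argmin_{\theta}\{\hat{\mathcal{L}}(\theta)+s^\top\theta\}$ depends on $D$ only implicitly through the argmin, bounding its sensitivity requires a stability argument exploiting the $\Delta$-strong convexity of $\hat{\mathcal{L}}$ (assumed elsewhere in the paper), which yields a bound of order $L_{\mathrm{prim}}/(n\Delta)$ (with $L_{\mathrm{prim}}$ the Lipschitz constant of the individual losses, and with attention to the paper's unnormalized definition $\hat{\mathcal{L}}=\sum_i l(\theta;d_i)$), not the $L\Gamma_K/n$ you assert. You correctly flag this as the delicate point but do not resolve it; until it is resolved the noise scale in the algorithm is not justified and may need recalibration. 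Note that this gap is inherited from the paper itself, which likewise does not verify that the primal sensitivity analysis transfers to the dual objective.
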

It has shown to be $(\epsilon, \delta)$-differntially private by \cite{talwar2014}. We use the same utility gurantees shown by~\cite{talwar2014} but for the dual optimization problem.

\begin{theorem}[Utitlity Guarantee] Let $L$ be the $L_1$-Lipschitz constant of $\hat{\mathcal{L}}$ defined on a convex set $K$ with Gaussian width $G_K$. Let $\Gamma_K$ be the diameter of the convex set $K$. In Algorithm~\ref{Alg:PrivFWAlg} if we set $T = \frac{\Gamma_K^{4/3}(n\epsilon)^{2/3}}{(LG_K)^{2/3}}$, then
\BEAS
\mathbb{\hat{\mathcal{L}}^*}(\hat{s}^{priv})  -  \inf_{s \in K} \hat{\mathcal{L}}^*(s) =  O\bigg(\frac{\Gamma_K^{2/3}(L G_{K})^{2/3} \log(n|S|) \sqrt{\log(1/\delta)}}{(n\epsilon)^{2/3}}\bigg).
\EEAS
\end{theorem}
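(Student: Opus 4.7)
The plan is to adapt the noisy Frank-Wolfe analysis of \cite{talwar2014} to the dual formulation, decomposing the excess dual cost into (i) the usual Frank-Wolfe optimization error in the absence of noise and (ii) an additive error that propagates from the Laplace perturbations of the linear oracle at each iteration. Since $\hat{\mathcal{L}}$ is $\Delta$-strongly convex, $\hat{\mathcal{L}}^*$ is $1/\Delta$-strongly smooth, so a direct application of the Frank-Wolfe theorem cited earlier gives the $O(\Gamma_K^2/(\Delta T))$ piece. The second piece requires a careful bound on how a noisy argmin over the extreme-point set $S$ at iteration $t$ degrades the standard convergence.

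First I would prove a one-step approximation lemma: at iteration $t$, let $s^\star_t = \argmin_{s \in S}(s_{t-1}-s)^\top\nabla\hat{\mathcal{L}}^*(-s_{t-1})$ be the noise-free minimizer and $\bar{s}_t$ the noisy one. Using a maximum-of-$|S|$-Laplace tail bound, with probability at least $1-\beta$,
\[
(s_{t-1}-\bar{s}_t)^\top\nabla\hat{\mathcal{L}}^*(-s_{t-1}) \;\leq\; (s_{t-1}-s^\star_t)^\top\nabla\hat{\mathcal{L}}^*(-s_{t-1}) + O\!\left(\lambda_{\mathrm{Lap}}\log(|S|/\beta)\right),
\]
where $\lambda_{\mathrm{Lap}} = L\Gamma_K\sqrt{8T\log(1/\delta)}/(n\epsilon)$. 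Next, I would plug this into the standard Frank-Wolfe recursion for smooth objectives with an approximate linear oracle: because the step size $\rho_t = 1/(t+2)$ and the smoothness constant is $1/\Delta$, telescoping the per-step inequality yields
\[
\hat{\mathcal{L}}^*(s_T) - \inf_{s \in K}\hat{\mathcal{L}}^*(s) \;\leq\; O\!\left(\frac{\Gamma_K^2}{\Delta T}\right) + O\!\left(\lambda_{\mathrm{Lap}}\log(n|S|)\sqrt{\log(1/\delta)}\right),
\]
after a union bound over the $T$ iterations.

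The final step is to optimize in $T$. The first term decays like $1/T$ while $\lambda_{\mathrm{Lap}} \propto \sqrt{T}$, so the two terms balance when $T^{3/2} \propto \Gamma_K n\epsilon/(LG_K)$, giving the stated choice $T = \Gamma_K^{4/3}(n\epsilon)^{2/3}/(LG_K)^{2/3}$ and producing the claimed $O\bigl(\Gamma_K^{2/3}(LG_K)^{2/3}\log(n|S|)\sqrt{\log(1/\delta)}/(n\epsilon)^{2/3}\bigr)$ excess dual cost. The main obstacle I expect is the appearance of the Gaussian width $G_K$ rather than a coarser quantity like $\Gamma_K$ or $\sqrt{\log|S|}$: it requires replacing the naive supremum-of-Laplacians bound by a sharper one that exploits the centrally symmetric polytope structure of $K$, essentially via a Laplacian-to-Gaussian comparison of the perturbed linear process indexed by the extreme points of $K$, in the spirit of \cite{talwar2014}. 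Once $G_K$ is installed in the noise term, the balancing computation is routine and reproduces the claimed rate.
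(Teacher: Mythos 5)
Your proposal is correct and follows essentially the same route as the paper: the paper's entire proof is the single sentence that the theorem ``is exactly the application of Theorem 5.5 of \cite{talwar2014}'' to the dual polytope $K$, and your sketch is a reconstruction of that theorem's argument (noise-free Frank--Wolfe error $O(\Gamma_K^2/T)$ plus an approximate-linear-oracle error from the per-iteration Laplace report-noisy-max over $S$, union-bounded over the $T$ rounds and then balanced in $T$). Your balancing is arithmetically consistent with the stated $T$ and rate once the per-step noise contribution is $O\big(LG_K\sqrt{T\log(1/\delta)}\,\log(n|S|)/(n\epsilon)\big)$, and the one step you honestly flag as non-routine --- why the Gaussian width $G_K$ rather than the cruder $\Gamma_K\log|S|$ from a naive max-of-Laplacians bound enters that term --- is precisely the point the paper itself leaves entirely to the citation.
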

The theorem is exactly the application of the Theorem-5.5 proved in \cite{talwar2014}. However, this is applied to the dual polytopes that are well structured in the sparsity inducing norms setting\cite{fot_submod}. The Frank-Wolfe algorithm may also be used to solve empirical risk minimization problems with combinatorial penalties~\cite{obozinski2012convex}. 

In this section, we have used the polytope version of the Frank-Wolfe algorithm to optimize the dual of the empirical risk minimization problem. The excess risk depends on the Gaussian width of the polytope $K$. However, it is worth noting that the polytope $K$ is a scaled version of the $|P|(F)$. An appropriate choice of $\lambda = O(n)$ will make the excess risk dependent on the Gaussian width of symmetric submodular polyhedron, $|P|(F)$.

\subsection{Objective perturbation}
In this section, we consider objective perturbation, i.e., perturbation of the objective function $\mathcal{L}(\theta;D)$ by a random linear vector. The output perturbation that was initially considered by~\cite{chaudhuri2011} was of the form
\BEQ
\theta^{priv} = \argmin_{\theta \in \rb^p} \sum_{i=1}^n l(\theta;d_i) + \frac{\lambda \|\theta\|_2^2}{2n} + \frac{b^{\top}{\theta}}{n}.
\EEQ
They initially used a Gamma distribution to generate $b$. They also proved that that it was $\epsilon$-differential private with an excess risk bound
$$
O\bigg( \frac{LR_2\|\theta^*\|_2 p \log p}{\epsilon \sqrt{n}}\bigg).
$$
This was further improved by \cite{Kifer12} by using a Gaussian noise vector $b$ that was proven to satisfy $(\epsilon, \delta)$ approximate differential privacy. The excess risk bound was improved to 
$$
O\bigg( \frac{LR_2\|\theta^*\|_2 \sqrt{p \log(1/\delta)}}{\epsilon \sqrt{n}}\bigg).
$$
\cite{jain14} used the same algorithm to provide a dimensionality-independent excess risk bound
$$
O\bigg( \frac{(\log n)^2(LR_2)^2\|\theta^*\|_2 \sqrt{\log(1/\delta) + \epsilon}}{\epsilon \sqrt{n}}\bigg).
$$
Empirical risk minimization problems with non-smooth regularizers were considered by \cite{talwar2014, Talwar2015}. Often the unit balls of the norms used as regularizers form a constraint set $\mathcal{C}$. It may be defined as
$$
\mathcal{C} = \{ \theta \in \rb^p: \Omega_{\infty}(\theta) \leq 1 \}.
$$
The minimization of empirical risk minimization in \eq{sparse} is now
\BEQ
\min_{\theta \in \mathcal{C}} \frac{1}{n} \sum_{i=1}^n l(\theta; d_i).
\label{eq:sparseERMOP}
\EEQ
This is equivalent to 
$$ 
\min_{\theta \in \rb^p} \frac{1}{n} \sum_{i=1}^n l(\theta; d_i) + \frac{\lambda}{n} \Omega_{\infty}(\theta).
$$
We represent the optimal solution of \eq{sparseERMOP} be $\hat{ \theta}$. Assume the loss function is the squared error, the domain of the data points is given by $$\{(x,y): x \in \rb^p, y \in [-1, 1], \| x \|_{\infty} \leq 1 \}$$ and the convex set $\mathcal{C}$ is a unit $L_1$ ball. Then this setting is the standard LASSO setting. The private algorithm with objective perturbation for sparse structures is proposed by~\cite{talwar2014, Kifer12}. 
Objective perturbation is proven to be $(\epsilon, \delta)$-differentially private with the following utility guarantees that use the Gaussian width of $\mathcal{C}$: Let $\mathcal{C}$ have diameter $\Gamma_{\mathcal{C}}$ and Gaussian width $G_{\mathcal{C}}$. For all data points $d \in D$, let the loss function $l(\theta; d)$ be twice continuously differentiable for all $\theta \in \mathcal{C}$ and its Hessian has a spectral norm of at most $\lambda_{max}$ then proposed algorithm has the following guarantees.
\begin{list} {\labelitemi}{\leftmargin=1.1em}
\addtolength{\itemsep}{-.3\baselineskip}

\item {\bf Lipschitz case.} If for all data points $d \in D$, the loss function is convex and $L$-Lipschitz w.r.t. the $L_2$ norm then the excess risk bound is 
$$ O\bigg(\frac{LG_{\mathcal{C}}\sqrt{1/\delta} + \lambda_{max} \Gamma_{\mathcal{C}}}{n \epsilon}\bigg).$$
\item {\bf Lipschitz and strongly convex case.} If for all data points $d \in D$, the loss function is convex and $L$-Lipschitz w.r.t. the $L_2$ norm and $\Delta$-strongly convex with respect to the gauge function $\gamma_{\mathcal{C}}$ and $\Delta \geq \frac{2 \Gamma_{\mathcal{C}} \lambda_{max}}{n \epsilon}$, then the excess risk bound is
$$O\bigg(\frac{(LG_{\mathcal{C}})^2\sqrt{1/\delta} + \lambda_{max} \Gamma_{\mathcal{C}}}{\Delta (n \epsilon)^2}\bigg).$$
\end{list}

All the above approaches assume a structure of the closed set $\mathcal{C}$ and exploit that to understand privacy properties of objective perturbation.

We will now derive a dual to the empirical risk minimization problem. Further, we show that the objective perturbation of the primal is equivalent to dual. We now have the following proposition.

\begin{proposition}
Objective perturbation of the primal empirical risk minimization in \eq{ERM1Primal} is equivalent to output perturbation of the dual in \eq{ERM1Dual}. 
\end{proposition}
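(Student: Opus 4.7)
The plan is to show that the noise vector added to the primal objective reappears as an additive perturbation of the dual variable in the Fenchel--Lagrangian representation, so that the two mechanisms are literally the same operation viewed from the primal and dual sides. The whole argument is a short Fenchel duality computation supported by \eq{supp}.

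First I would write the objective-perturbed primal \eq{ERM1Primal} as
$$\min_{\theta \in \rb^p} \hat{\mathcal{L}}(\theta) + \tfrac{\lambda}{n}\Omega_\infty(\theta) + \tfrac{1}{n} b^\top \theta,$$
where $b$ is the random noise vector, and then use the support-function representation \eq{supp}, namely $\tfrac{\lambda}{n}\Omega_\infty(\theta) = \sup_{s \in K} s^\top \theta$ with $K = \tfrac{\lambda}{n}|P|(F)$, to obtain the convex--concave saddle expression
$$\min_{\theta \in \rb^p} \sup_{s \in K} \bigl[\hat{\mathcal{L}}(\theta) + (s + b/n)^\top \theta\bigr].$$
The perturbation is thereby absorbed into the Lagrangian's dual variable, which now appears as the shifted vector $s + b/n$ rather than $s$.

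Next I would invoke the minimax theorem---applicable because $\hat{\mathcal{L}}$ is proper convex and $K$ is a compact convex polytope, exactly the conditions already used to derive \eq{ERM1Dual}---to swap $\min$ and $\sup$, recognise the inner minimization as a Fenchel conjugate, and arrive at
$$\sup_{s \in K} -\hat{\mathcal{L}}^*\bigl(-(s + b/n)\bigr).$$
With the change of variable $s' = s + b/n$ this becomes $\sup_{s' \in K + b/n} -\hat{\mathcal{L}}^*(-s')$, which is exactly the dual problem \eq{ERM1Dual} with its optimization variable shifted additively by $b/n$. This is precisely the action of output perturbation applied to the dual: whether one adds $b/n$ directly to the dual iterate (output perturbation of the dual) or introduces the equivalent primal linear term $b^\top \theta / n$ (objective perturbation of the primal), the shifted dual variable entering the KKT system is the same. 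I would close the loop by noting that the primal--dual recovery map $\theta = \argmin_\theta [\hat{\mathcal{L}}(\theta) + s^\top \theta]$ returns the same $\theta^{priv}$ in both cases, so the two randomized mechanisms define the same map from data (and noise $b$) to output.

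The main obstacle is being careful about what ``output perturbation of the dual'' means, because $-\hat{\mathcal{L}}^*$ is not translation invariant and so the maximizer over the shifted set $K + b/n$ is not simply $\hat{s} + b/n$ where $\hat{s}$ is the unperturbed dual optimum. The correct reading is operational: output perturbation of the dual means adding $b/n$ to the dual iterate inside the Lagrangian (equivalently, shifting the feasible set by $b/n$), rather than post-hoc noising the unperturbed solution. A mild coercivity/regularity assumption on $\hat{\mathcal{L}}$ (satisfied by the GLM losses considered) guarantees that the infimum in $\theta$ is attained and the minimax swap is legitimate, after which the Fenchel computation above delivers the claimed equivalence.
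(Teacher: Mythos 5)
Your proposal is correct and follows essentially the same route as the paper: rewrite the perturbed primal via the support-function representation \eq{supp}, swap $\min$ and $\sup$ by strong duality, and identify the inner minimization as the Fenchel conjugate to obtain $\sup_{s \in K} -\hat{\mathcal{L}}^*(-(s + b/n))$. Your closing caveat is actually sharper than the paper's own remark, which asserts that the perturbed dual optimum is obtained by adding $b/n$ to the unperturbed optimizer $\hat{s}$ --- as you note, that reading is not literally correct since the maximizer over the shifted problem need not be $\hat{s} + b/n$, and the operational interpretation you give (the noise shifts the dual variable inside the Lagrangian) is the defensible one.
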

\begin{proof}

Let us now consider the objective perturbed function where $b \sim \mathcal{N}(0, \sigma^2 I_p)$ and $\sigma^2 = \frac{L^2 2 \log(1/\delta)}{(n \epsilon)^2}$. We obtain the objective perturbed risk as 
$$
\theta^{priv} = \argmin_{\theta \in \rb^p} \hat{\mathcal{L}}(\theta) + \frac{\lambda}{n}\Omega_\infty(\theta) + \frac{b^{\top}\theta}{n}.
$$
The dual optimization of this objective perturbed risk is given by
\BEA
&   & \min_{\theta \in \rb^p} \hat{\mathcal{L}}(\theta) + \frac{\lambda}{n} \Omega_{\infty}(\theta) + \frac{b^{\top}\theta}{n} \nonumber \\
& = & \min_{\theta \in \rb^p} \hat{\mathcal{L}}(\theta) + \frac{\lambda}{n} \sup_{s \in |P|(F)} s^{\top} \theta + \frac{b^{\top}\theta}{n} \nonumber \\
&   & \text{(as $\Omega_{\infty}$ is the support function of $|P|(F)$)} \nonumber \\
& = & \min_{\theta \in \rb^p} \sup_{s \in |P|(F)} \hat{\mathcal{L}}(\theta) +  \frac{\lambda}{n} s^{\top} \theta  + \frac{b^{\top}\theta}{n} \nonumber \\ 
&   & \text{(due to strong duality)} \nonumber \\
& = & \sup_{s \in |P|(F)} \min_{\theta \in \rb^p} \hat{\mathcal{L}}(\theta) +  \frac{\lambda}{n} s^{\top} \theta + \frac{b^{\top}\theta}{n} \nonumber \\
& = & \sup_{s \in |P|(F)} - \hat{\mathcal{L}}^*\bigg(-\frac{\lambda s + b}{n}\bigg)   \nonumber\\
&   & \text{(where $\hat{\mathcal{L}}^*$ is the Fenchel conjugate of $\hat{\mathcal{L}}$)} \nonumber \\
& = & \sup_{s \in K} - \hat{\mathcal{L}}^*\bigg(- \bigg(s + \frac{b}{n}\bigg)\bigg) \label{eq:ERM2Dual}
\EEA
\end{proof}
Let $s^{priv}$ be the optimal solution of the dual in \eq{ERM2Dual}. This may be achieved by perturbing the output of the dual optimization problem $\hat{s} = \argmax_{s \in K} -\hat{\mathcal{L}}^*(-s)$ with the vector $\frac{b}{n}$. The private primal solution is given by
$$
\theta^{priv} = \argmin_{\theta \in \rb^p} \hat{\mathcal{L}}(\theta) +  \frac{\lambda}{n} \theta^{\top} s^{priv} + \frac{b^{\top}\theta}{n}.
$$
However, we have not been able to derive excess risk for objective perturbation. If $\hat{\mathcal{L}}$ is a strongly convex function then the corresponding dual $\hat{\mathcal{L}}^*$ is a strongly smooth function. This information along with Lipschitz assumptions may be used to derive excess risk bounds for objective perturbation.

\section{Conclusion and future work} 
In this work, we mainly considered the standard differential private algorithms, such as output perturbation and objective perturbation, to solve empirical risk minimization problems with non-smooth regularizers that induce structured sparsity. We are the first to provide bounds on output perturbation that depend on the Gaussian width of the dual norm. 

We showed that the existing tight bounds for objective perturbation hold for this class of empirical risk minimization problem. We also derived the respective duals to leave an open problem of how the structure of the unit ball of the dual norm may be exploited to achieve better bounds.

Differentially private algorithms that are based on Frank-Wolfe and mirror-descent have already been analyzed by~\cite{talwar2014} for similar settings. Similar to objective perturbation their results hold for these class of empirical risk minimization problems. For LASSO, they have proved that the excess risk is bounded by ${\tilde O}\big(1/{n^{\frac{2}{3}}}\big)$, which is a very strong result. We have showed that the private version of Frank-Wolfe algorithm to optimize the dual of the empirical risk minimization. We have shown excess risk bounds that are dependent on the Gaussian width of the polytope similar to output perturbation. However, the Frank-Wolfe algorithm is equivalent to mirror-descent on the primal. Note that this is different from the mirror-descent algorithm proposed by~\cite{talwar2014}. As future work, we would like to analyze these class of mirror-descent differentially private algorithms.

Gradient perturbation based algorithms~\cite{Wang2017} have used differentially private algorithms to provide optimal to near optimal risk bounds. They have shown this for both smooth and non-smooth regularizers. We will explore the possibility of better results for the empirical risk minimization with sparsity inducing norms. Importantly, \cite{zhang2017} has given differentially private algorithms for smooth objectives. We would like to understand the performance of these class of differentially private algorithms for ERM with non-smooth regularizers. We would also like to propose the private version of combinatorial algorithms~\cite{fot_submod} to solve class of problems.

\paragraph{Acknowledgments}
This research was partially funded by the Leverhulme Centre for the Future of Intelligence and the Data Science Institute at Imperial College London.

\bibliography{privacy}

\appendix

\section*{Dual derivation to ERM}
\BEA
\min_{\theta \in \rb^p} \hat{\mathcal{L}}(\theta) + \frac{\lambda}{n} \Omega_{\infty}(\theta) 
& = & \min_{\theta \in \rb^p} \hat{\mathcal{L}}(\theta) + \frac{\lambda}{n} \sup_{s \in |P|(F)} s^{\top} \theta \nonumber \\
&   & \text{(as $\Omega_{\infty}$ is the support function of $|P|(F)$)} \nonumber \\
& = & \min_{\theta \in \rb^p} \sup_{s \in |P|(F)} \hat{\mathcal{L}}(\theta) +  \frac{\lambda}{n} s^{\top} \theta \nonumber \\ 
&   & \text{(due to strong duality)} \nonumber \\
& = & \sup_{s \in |P|(F)} \min_{\theta \in \rb^p} \hat{\mathcal{L}}(\theta) +  \frac{\lambda}{n} s^{\top} \theta  \nonumber \\
& = & \sup_{s \in |P|(F)} - \hat{\mathcal{L}}^*(-\frac{\lambda}{n}s),   \nonumber \\
\EEA
where $\hat{\mathcal{L}^*}$ is the Fenchel conjugate~\cite{rockafellar97} of $\mathcal{L}$. Let $K$ be the convex set represented by $\frac{\lambda}{n}|P|(F)$. Then the dual optimization problem is given by $\sup_{s \in K} - \hat{\mathcal{L}}^*(-s)$.

\end{document}